\def\w{{\bf w}}
\def\s{{\bf s}}
\def\b{{\bf b}}
\def\m{{\bf m}}
\def\EX{{\mathbb {E}}}
\theoremstyle{plain}
\newtheorem{theorem}{Theorem}[section]
\newtheorem{proposition}[theorem]{Proposition}
\newtheorem{corollary}[theorem]{Corollary}
\theoremstyle{definition}
\theoremstyle{remark}
\icmltitlerunning{Long Horizon Temperature Scaling}
\begin{document}

\twocolumn[
\icmltitle{Long Horizon Temperature Scaling}



\icmlsetsymbol{equal}{*}

\begin{icmlauthorlist}
\icmlauthor{Andy Shih}{stanford}
\icmlauthor{Dorsa Sadigh}{stanford}
\icmlauthor{Stefano Ermon}{stanford}
\end{icmlauthorlist}

\icmlaffiliation{stanford}{Department of Computer Science, Stanford University}
\icmlcorrespondingauthor{Andy Shih}{andyshih@cs.stanford.edu}

\vskip 0.3in
]



\printAffiliationsAndNotice{}  

\begin{abstract}
Temperature scaling is a popular technique for tuning the sharpness of a model distribution. It is used extensively for sampling likely generations and calibrating model uncertainty, and even features as a controllable parameter to many large language models in deployment.
However, autoregressive models rely on myopic temperature scaling that greedily optimizes the next token. To address this, we propose \textit{Long Horizon Temperature Scaling} (LHTS), a novel approach for sampling from temperature-scaled \textit{joint} distributions. 
LHTS is compatible with all likelihood-based models, and optimizes for the long horizon likelihood of samples. We derive a temperature-dependent LHTS objective, and show that finetuning a model on a range of temperatures produces a single model capable of generation with a  controllable long horizon temperature parameter. We experiment with LHTS on image diffusion models and character/language autoregressive models, demonstrating advantages over myopic temperature scaling in likelihood and sample quality, and showing improvements in accuracy on a multiple choice analogy task by $10\%$. Our code is available at \url{https://github.com/AndyShih12/LongHorizonTemperatureScaling}.
\end{abstract}

\section{Introduction}

\begin{figure*}[t]
    \includegraphics[width=\linewidth]{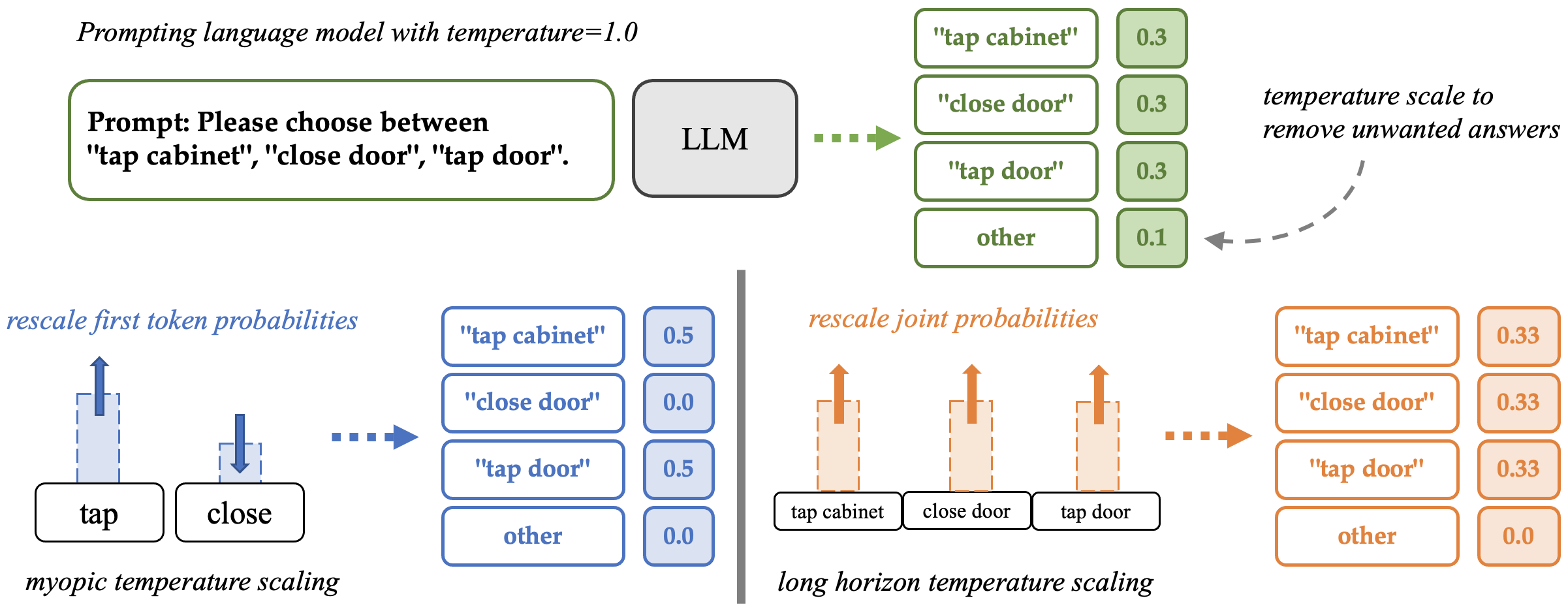}
    \caption{Pitfalls of myopic temperature scaling. At the top of the diagram, we depict prompting a language model for a choice of three actions. The language model may respond with each choice with a probability of $0.3$ (shown in green), and a remaining probability of $0.1$ of outputting irrelevant answers. To reduce the probability of irrelevant answers, we can lower the temperature of the model. In blue, we show that myopic temperature scaling will unintuitively lump the probabilities for the two actions ``tap cabinet'' and ``tap door'', because they share the same first token ``tap''. Therefore, lowering the myopic temperature will emphasize the probability on these two choices, and diminish the probability of choosing ``close door''. On the other other hand, in orange we show that long horizon temperature scaling correctly scales the joint probability of the full sequence, equally distributing a probability of one-third among the three choices.}
    \label{fig:cover}
\end{figure*}

Temperature scaling is a simple yet effective technique for rescaling model outputs: lowering the temperature to increase the probability of high-likelihood outcomes, or vice versa. In discriminative settings, tuning the temperature has shown success as a calibration method~\cite{guo2017calibration, nixon2019measuring, desai2020calibration}. The model outputs a small set of class probabilities, which can be tractably rescaled to match the desired calibration metric.

In generative tasks, temperature scaling also serves as a method for controlling the randomness of model outputs, and has shown to be useful for many natural language generation tasks such as summarization and question answering~\cite{liang2022holistic}. Many current models in deployment~\cite{brown2020language, bommasani2021opportunities} even expose the model temperature as a user-controllable parameter in their API. These autoregressive language models execute temperature scaling one token at a time, rescaling the probability of the next token to be proportional to $\log p(x_i|x_{<i})/T$. However, this mechanism is myopic, optimizing for the next token instead of the full sequence.

We reexamine the current practice of temperature scaling for generative models. Unlike discriminative tasks, generative tasks produce high-dimensional outputs. In other words, rescaling the model outputs should, in principle, rescale joint probabilities according to $\log p(x)/T$. Lowering the temperature of a language model should ideally bias the model towards generation of full text sequences with high joint likelihood, not just greedy generation of the next likely tokens. 
However, due to the intractability of joint temperature scaling, existing model families rely on various ad-hoc approximations such as myopic temperature scaling. This perspective highlights the following concerns. 
\begin{itemize}
    \item[\textbf{A)}] Current temperature scaling for autoregressive models is a myopic approximation to temperature scaling of joint probabilities.
\end{itemize}
Many other model families do not support myopic approximations, and are left with the intractable problem of joint temperature scaling. Some sidestep the problem by defining various notions of pseudo-temperatures~\cite{kingma2018glow, vahdat2020nvae}.
\begin{itemize}
    \item[\textbf{B)}] Many non-autoregressive generative models either rely on pseudo-temperatures or do not use temperature scaling altogether.
\end{itemize}

To address these concerns, we aim head-on for the goal of joint temperature scaling. Instead of handling various model-specific temperature scaling techniques, we set out to develop a practical and general mechanism for sampling from temperature-scaled joint distributions. We propose \textit{Long Horizon Temperature Scaling} (LHTS), a novel and tractable approach for sampling from a temperature-scaled joint distribution that is \textbf{A)} non-myopic and \textbf{B)} compatible with all likelihood-based generative models. LHTS requires finetuning a likelihood-based model on a temperature-dependent objective, after which the model can sample long horizon temperature-scaled outputs without any additional cost over standard sampling. By finetuning over a range of temperatures, we can learn a single model capable of generation with a controllable parameter, extrapolating even to temperatures unseen during finetuning. 

LHTS enables autoregressive models to optimize for high likelihood outputs over a long horizon instead of a single token (Figure~\ref{fig:cover}). For other likelihood-based models (e.g. VAEs, normalizing flows, diffusion models), LHTS presents a unified model-agnostic temperature scaling mechanism. We experiment with LHTS in three settings: a diffusion image model, an autoregressive character model, and autoregressive large language models. Our experiments show that LHTS can achieve a better tradeoff between likelihood and diversity compared to pseudo-temperature scaling for diffusion models, and compared to myopic temperature scaling for autoregressive models. On a downstream analogy multiple-choice task, LHTS improves the accuracy of GPT-2 by $10\%$ over myopic temperature scaling.

\section{Background}

For generative tasks, we have access to a data distribution $p_{\text{data}}(x)$ in the form of a training set $\mathcal{D}$ of i.i.d. samples, from which we aim to learn a faithful model $p(x)$ of the data distribution. In principle, having learned the ideal model $p(x)$ for our downstream task, we would be satisfied with drawing conditional/unconditional samples from $p(x)$. 

However, in practice, biasing samples towards higher likelihood regions of the model distribution is often beneficial. For example, we often choose to calibrate the entropy of a suboptimal model~\cite{holtzman2019curious}, generate less noisy behavior by taking the argmax action, or simply sample from a sharper distribution. The most prominent technique for biasing towards high likelihood regions is \textit{temperature scaling} with a scalar $T$.
\begin{align}
    \log p_T(x) = \log p(x) / T  - \log Z_{p_T}\label{eq:temp_joint}
\end{align}

where $Z_{p_T}$ is the partition function.
For temperatures $T < 1$, the scaled model $p_T(x)$ defines a sharper distribution, which is useful for a variety of applications mentioned above.

\subsection{Myopic temperature scaling}
Autoregressive models, such as GPT~\cite{radford2019language,brown2020language}, implement a myopic approximation to temperature scaling. Autoregressive models learn a set of univariate conditional distributions $\log p(x_i | x_{<i})$ and rely on the factorization of the joint distribution via chain rule $\log p(x) = \sum_i \log p(x_i | x_{<i})$. When sampling with a temperature $T$, they rescale each univariate conditional by $T$.
\begin{align}
    \log p^{\text{myopic}}_T(x_i | x_{<i}) = \log \frac{ e^ {\log p(x_i | x_{<i}) / T} } { \sum_k e^{\log p(x_i = k | x_{<i}) / T} }
\end{align}
This approach is efficient since it handles one dimension at a time and only requires rescaling the output logits. However, since the scaling is \textit{myopic}, the chain rule factorization does not preserve the scaled joint distribution in Eq~\ref{eq:temp_joint}. 
\begin{align}
    \log p_T(x) \neq \sum_i \log p^{\text{myopic}}_T(x_i | x_{<i})
\end{align}

It is easy to see that in the extreme case, myopic scaling of an autoregressive model with $T \rightarrow 0$ will not necessarily produce the argmax sample of the joint distribution. 

\subsection{Pseudo-temperature scaling}

Non-autoregressive models are often associated with various ad-hoc notions of pseudo-temperature scaling. For example, some latent variable models~\cite{kingma2018glow, vahdat2020nvae} rescale the variance of the prior of the latent variable. However, these notions of pseudo-temperature are often model-specific, and have an unclear relationship to temperature scaling of the data likelihood.

\section{Related Work}

Temperature scaling is an effective method for calibration in discriminative settings~\cite{guo2017calibration, nixon2019measuring, desai2020calibration}, where the output predictions of a model can be rescaled post-hoc. In generative settings, such as natural language generation, myopic temperature scaling serves as an important knob for controlling the randomness of autoregressive models, often featuring as a user-controllable parameter in deployment~\cite{brown2020language, bommasani2021opportunities}. For latent variable models, such as normalizing flows or VAEs, reducing the variance of the prior during sampling has been explored as a pseudo-temperature mechanism~\cite{kingma2018glow, vahdat2020nvae}.
Due to the high-dimensional output space of generative tasks, however, these above methods are approximations that do not directly scale the temperature of the joint distribution, and are typically model-specific. Compared to these methods, LHTS presents a unified and tractable mechanism for temperature scaling of the joint distribution.

Other techniques for post-hoc manipulation of autoregressive model generation include top-k~\cite{fan2018hierarchical} or nucleus sampling~\cite{holtzman2019curious}. More intensive search-based alternatives are also popular, such as beam search~\cite{li2016simple, vijayakumar2018diverse} for pick out high-likelihood generations. In terms of computational cost, LHTS only requires a one-time finetuning of the model, after which long horizon temperature-scaled outputs can be generated directly without search.

Biasing the model towards higher-likelihood samples can also be viewed as controllable generation. Some relevant works include Quark~\cite{lu2022quark}, which partitions the dataset based on a control signal of interest (e.g. toxicity), and reinforces the model with its own generations. Other works on controllable generation include class-conditional generation, for example with diffusion models for images~\cite{nichol2021improved}.

Finally, LHTS relates closely to amortized inference~\cite{gershman2014amortized}, since we learn a model to predict intractable temperature-scaled joint distributions. As the temperature approaches zero, LHTC approximates MAP inference~\cite{koller2009probabilistic}. 

\section{Long Horizon Temperature Scaling}

We propose \textit{long horizon temperature scaling} (LHTS), a general method to temperature scale the joint distribution of likelihood based models. LHTS proceeds by directly learning a model $q_T$
to match the temperature scaled distribution in Eq.~\ref{eq:temp_joint}. The model $q_T$ should have tractable likelihood and sampling, but typically this is satisfied by choosing the same model family as $p$, or even finetuning from $p$.
\begin{align*}
    \min_{q_T} KL(p_T || q_T) &= \min_{q_T} \EX_{x \sim p_T} [ \log p_T(x) - \log q_T(x) ] \\
    &= \min_{q_T} \EX_{x \sim p_T} [ - \log q_T(x) ]
\end{align*}

Although we don't have sample access to $p_T$, we can appeal to importance sampling from $p$.
\begin{align}
    & \EX_{x \sim p_T} [ - \log q_T(x) ] \nonumber \\
    =& \EX_{x \sim p} \frac{e^{ (\log p(x) / T) - \log Z_{p_T}}}{p(x)} [ - \log q_T(x) ] \nonumber \\
    =& \EX_{x \sim p} e^{ \frac{1-T}{T} \log p(x) - \log Z_{p_T}} [ - \log q_T(x) ] \label{eq:iw_obj_init}
\end{align}

Optimizing $q_T$ with Eq.~\ref{eq:iw_obj_init} will give us the desired temperature scaled distribution from Eq.~\ref{eq:temp_joint}, although the variance of the loss can be high due to the importance weights.

We note that the intractable constant $\log Z_{p_T}$ can be ignored since it evaluates as a constant multiplicative factor of the entire expression. More importantly, though, the same insight allows us to subtract an arbitrary data-independent \textit{baseline} $b$ for variance reduction. 
Since the importance weights are not in log-space, we need to carefully choose a baseline to keep the weights within a manageable range. We opt for keeping the weights close to $1$ by matching the empirical mean of the exponent.
\begin{align}
    b = \frac{1}{|\mathcal{D}|} \sum_{x \in \mathcal{D}} \frac{1-T}{T} \log p(x) \label{eq:baseline_b}
\end{align}

Put together, the loss for training $q_T$ can be understood as a reweighing of data by a factor $w_T(x)$ based on the temperature-scaled joint probabilities.
\begin{align}
    w_T(x) &= \exp( \frac{1-T}{T} \log p(x) - b) \\
    \mathcal{L}(q_T) &= - \EX_{x \sim p} [ w_T(x) \log q_T(x) ] \label{eq:iw_loss}
\end{align}

\begin{corollary}
$\frac{e^b}{Z_{p_T}} \mathcal{L}(q_T) = KL(p_T || q_T) + H(p_T)$.
\end{corollary}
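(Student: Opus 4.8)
The plan is to unwind the definitions on the left-hand side and recognize the result as a scaled version of the objective in Eq.~\ref{eq:iw_obj_init}. First I would substitute the definition of $w_T(x)$ from Eq.~\ref{eq:iw_loss} into $\mathcal{L}(q_T)$, giving $\mathcal{L}(q_T) = -\EX_{x\sim p}[\exp(\frac{1-T}{T}\log p(x) - b)\log q_T(x)]$. Multiplying through by $e^b/Z_{p_T}$ pulls the $e^{-b}$ out of the exponent and leaves exactly $-\EX_{x\sim p}[\exp(\frac{1-T}{T}\log p(x) - \log Z_{p_T})\log q_T(x)]$, which is precisely the expression in Eq.~\ref{eq:iw_obj_init}. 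By the chain of equalities already established in the excerpt, this equals $\EX_{x\sim p_T}[-\log q_T(x)]$.

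Next I would add and subtract $\log p_T(x)$ inside the expectation over $p_T$: we have $\EX_{x\sim p_T}[-\log q_T(x)] = \EX_{x\sim p_T}[\log p_T(x) - \log q_T(x)] + \EX_{x\sim p_T}[-\log p_T(x)]$. The first term is by definition $KL(p_T\|q_T)$ and the second is by definition the entropy $H(p_T)$. Combining these identifications gives $\frac{e^b}{Z_{p_T}}\mathcal{L}(q_T) = KL(p_T\|q_T) + H(p_T)$, as claimed.

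This argument is essentially a bookkeeping exercise, so there is no substantive obstacle; the only thing requiring minor care is confirming that the importance-sampling manipulation in Eq.~\ref{eq:iw_obj_init} is being applied with the correct normalization — specifically that the factor $e^{-b}$ introduced by the baseline is exactly cancelled by the $e^b$ prefactor, and that the remaining $e^{-\log Z_{p_T}}$ is the one that makes the importance weights integrate correctly against $p$ to reproduce $p_T$. Since $Z_{p_T}$ and $b$ are both data-independent constants, neither interacts with the expectation, so the cancellation is clean. I would also note that $H(p_T)$ is a constant in $q_T$, which is the practically relevant consequence: minimizing $\mathcal{L}(q_T)$ over $q_T$ is equivalent to minimizing $KL(p_T\|q_T)$, justifying the use of Eq.~\ref{eq:iw_loss} as a training objective.
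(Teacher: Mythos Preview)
Your proposal is correct and follows exactly the same approach as the paper's proof, just with more detail spelled out: the paper's proof is a terse two-liner stating that $\frac{e^b}{Z_{p_T}}\mathcal{L}(q_T)$ reduces to Eq.~\ref{eq:iw_obj_init}, which in turn equals $KL(p_T\|q_T)+H(p_T)$. Your expanded bookkeeping (cancelling $e^{-b}$ against $e^b$, then adding and subtracting $\log p_T(x)$) is precisely the unwinding of those two steps.
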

\begin{proof}
Evaluating $\frac{e^b}{Z_{p_T}} \mathcal{L}(q_T)$ gives Eq.~\ref{eq:iw_obj_init}, which is equal to $KL(p_T || q_T) + H(p_T)$.
\end{proof}

The idea of LHTS is to train a model $q_T$ with tractable sampling on the objective in Eq.~\ref{eq:iw_loss}, so that we can sample from $q_T \approx p_T$ efficiently after training.
In this sense, LHTS can be considered an amortized inference method for accessing otherwise intractable temperature-scaled joint distributions.
Compared to myopic temperature scaling, LHTS is not a pure post-hoc transformation since it requires model learning. Nevertheless, we can avoid learning completely from scratch, by finetuning $q_T$ from $p $ (which can be thought of as $q_{T=1}$). In return for the cost of finetuning, LHTS improves upon myopic temperature scaling in two ways. First, the temperature operates on the joint (long horizon) distribution, instead of greedily on one dimension at a time. Second, LHTS can be readily applied to any likelihood-based generative model, beyond just autoregressive models.

In the rest of this section, we examine LHTS on hierarchical latent variable models and autoregressive models. 

\subsection{LHTS on Hierarchical Latent Variable Models}
Applying LHTS on hierarchical latent variable models is straightforward, by using their variational lower bound estimates of the data likelihood.
\begin{align}
    \log p(x_0) \geq \EX_{h} \Big[ & D_{KL}( h(x_K | x_0 ) || p(x_K)) - \log p(x_0|x_1) \nonumber \\
                   + \sum_{k>1} & D_{KL}( h(x_{k-1} | x_k,  x_0 ) || p(x_{k-1} | x_k )) \Big] \label{eq:diffusion_elbo}
\end{align}
We can then plug in this likelihood lower bound to LHTS to compute the importance weights for each data point, and finetune $q_T$ with Eq.~\ref{eq:iw_loss}, where the inner likelihood is again evaluated with the lower-bound in Eq.~\ref{eq:diffusion_elbo}.

\textbf{Diffusion Models} Although diffusion models can also be formulated as a hierarchical latent variable model, they are often trained using a simpler MSE loss on the noise~\cite{ho2020denoising}. Nevertheless, LHTS is still directly applicable by scaling the loss for each point by the importance weight.
\begin{align}
    \mathcal{L} (q_T) = &  \label{eq:ddpm_loss} \\
    \EX_{k, x_0, \epsilon} & \Big[ w_T(x_0) || \epsilon  - \epsilon_{q_T}( \sqrt{ \bar{\alpha}_k} x_0 + \sqrt{ 1 - \bar{\alpha}_k} \epsilon, k) ||^2 \Big] \nonumber 
\end{align}

We can apply LHTS in exactly the same way for other likelihood-based models by scaling the log-likelihood loss of each datapoint by its importance weight. 
For autoregressive models, however, we can take advantage of the autoregressive factorization to derive a variance-reduced formulation of LHTS, which we describe next.

\subsection{Variance-Reduced LHTS on Autoregressive Models}
\label{sec:variance-reduced-lhts}

To apply LHTS to autoregressive models, we first rewrite the LHTS objective from Eq.~\ref{eq:iw_loss} into a form that is amenable to autoregressive architectures by first sampling the index $i$ uniformly, then the prefix $x_{<i}$, and then the suffix $x_{\geq i}$.
\begin{align*}
    & - \EX_{x \sim p} [ w_T(x) \log q_T(x) ]\\
    =& - \EX_{x \sim p} [ \sum_i w_T(x) \log q_T(x_i | x_{<i}) ]\\
    =& - \EX_{i, x_{<i} \sim p} \EX_{ x_{\geq i} \sim p(\cdot | x_{<i})} [ w_T(x) \log q_T(x_i | x_{<i}) ]
\end{align*}

The purpose of this roundabout rewriting of the expectation is to illustrate that the autoregressive objective is composed of many univariate conditional losses, for each index $i$ and prefix $x_{<i}$. This derivation allows us to design the baseline more carefully, since we can choose a different baseline for each univariate conditional loss while still trivially preserving the strict properness of the overall loss function.

\begin{proposition} \label{prop:baseline}
    Let $\mathcal{L}^{\text{AR}}(q_T) = $
\begin{align*}
    - \EX_{i, x_{<i} \sim p} e^{-b(x_{<i})} \EX_{ x_{\geq i} \sim p(\cdot | x_{<i})} [ w_T(x) \log q_T(x_i | x_{<i}) ]
\end{align*}
If $b(x_{<i})$ is finite for all $x_{<i}$, then $\mathcal{L}^{\text{AR}}_{q_T}$ is a strictly proper loss function, i.e. the unique global optimum is $q_T = p_T$.
\end{proposition}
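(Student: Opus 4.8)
The plan is to decompose $\mathcal{L}^{\text{AR}}(q_T)$ as a nonnegatively-weighted sum, indexed by pairs $(i, x_{<i})$, of cross-entropies $\EX_{x_i \sim p_T(\cdot \mid x_{<i})}[-\log q_T(x_i \mid x_{<i})]$, and then apply Gibbs' inequality termwise. The hypothesis that $b(x_{<i})$ is finite will be used precisely to guarantee that the weight $e^{-b(x_{<i})}$ in front of each term is strictly positive (neither $0$ nor $\infty$), so that no conditional is effectively dropped from the objective.

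First I would fix $i$ and $x_{<i}$ and use the fact that $q_T(x_i \mid x_{<i})$ does not depend on the suffix $x_{>i}$; writing $\EX_{x_{\geq i} \sim p(\cdot \mid x_{<i})}[\,\cdot\,] = \sum_{x_i} p(x_i \mid x_{<i})\,\EX_{x_{>i} \sim p(\cdot \mid x_{\leq i})}[\,\cdot\,]$ pulls the log-probability out of the suffix expectation. Next I would evaluate the remaining coefficient $p(x_i \mid x_{<i})\,\EX_{x_{>i} \sim p(\cdot \mid x_{\leq i})}[w_T(x)]$: since $p_T(x) = p(x)^{1/T}/Z_{p_T}$ we have $w_T(x) = \exp(\tfrac{1-T}{T}\log p(x) - b) = e^{-b} Z_{p_T}\, p_T(x)/p(x)$, and substituting $p(x) = p(x_{<i})\,p(x_i \mid x_{<i})\,p(x_{>i} \mid x_{\leq i})$ and summing over $x_{>i}$ collapses this to $\tfrac{e^{-b} Z_{p_T}}{p(x_{<i})}\, p_T(x_{<i})\, p_T(x_i \mid x_{<i})$. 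Folding in the factor $e^{-b(x_{<i})}$ and the outer weight $p(x_{<i})$ from $\EX_{i, x_{<i} \sim p}$, the $p(x_{<i})$ cancels and the $(i,x_{<i})$-term of $\mathcal{L}^{\text{AR}}(q_T)$ becomes $c_{i,x_{<i}} \cdot \EX_{x_i \sim p_T(\cdot\mid x_{<i})}[-\log q_T(x_i \mid x_{<i})]$ with $c_{i,x_{<i}} \propto e^{-b(x_{<i})}\, p_T(x_{<i}) \geq 0$.

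Then I would finish with properness. Each $c_{i,x_{<i}}$ is strictly positive on every \emph{reachable} prefix (one with $p_T(x_{<i}) > 0$), which is exactly where finiteness of $b(x_{<i})$ enters; and by Gibbs' inequality the cross-entropy $\EX_{x_i \sim p_T(\cdot\mid x_{<i})}[-\log q_T(x_i \mid x_{<i})]$ is minimized over probability distributions $q_T(\cdot \mid x_{<i})$ uniquely at $q_T(\cdot \mid x_{<i}) = p_T(\cdot \mid x_{<i})$. Since the conditionals $\{q_T(\cdot \mid x_{<i})\}$ are free parameters of an autoregressive model, minimizing the whole nonnegatively-weighted sum is equivalent to minimizing each term; hence the unique minimizer sets $q_T(\cdot \mid x_{<i}) = p_T(\cdot \mid x_{<i})$ on all reachable prefixes, and by the chain rule $q_T = p_T$.

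The main obstacle I expect is the bookkeeping in the coefficient evaluation: carefully marginalizing the \emph{un-logged} importance weight over the suffix and verifying that everything except a prefix-only constant collapses onto the single factor $p_T(x_i \mid x_{<i})$, and that the $1/p(x_{<i})$ produced by the importance weight is exactly cancelled by the $p(x_{<i})$ coming from sampling the prefix. A minor point to state cleanly is the zero-probability caveat: prefixes with $p_T(x_{<i}) = 0$ contribute nothing and leave the corresponding conditional unconstrained, so the conclusion $q_T = p_T$ should be read as equality on the support of $p_T$, as is standard for strictly proper losses.
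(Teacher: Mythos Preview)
Your proposal is correct and follows the same approach as the paper: decompose $\mathcal{L}^{\text{AR}}$ into a positively-weighted sum over $(i,x_{<i})$ of cross-entropies against $p_T(\cdot\mid x_{<i})$, then invoke strict properness (Gibbs' inequality) termwise together with the independence of the autoregressive conditionals. The paper's proof only sketches this (asserting that each inner expectation ``corresponds to optimizing $KL(p_T(\cdot\mid x_{<i})\,\|\,q_T(\cdot\mid x_{<i}))$'' and that a positive combination preserves strict properness), whereas you carry out the suffix-marginalization and coefficient bookkeeping explicitly and flag the zero-probability-prefix caveat; this extra detail is welcome and fully consistent with the paper's argument.
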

\begin{proof}
Each inner expectation takes on an importance-weighted log loss of the univariate conditional, corresponding to optimizing $KL(p_T(\cdot | x_{<i}) || q_T(\cdot | x_{<i}))$. Since an autoregressive model fits all the univariate conditionals separately, these are independent optimization problems each with strictly proper losses. Any positive combination ($b(x_{<i})$ is finite) preserves strict properness of the loss.
\end{proof}

In particular, we can set $b(x_{<i}) = \frac{1-T}{T} \log p(x_{<i}) + b(i) - b$ to be the temperature scaled joint distribution of the prefix, giving us a variance-reduced importance weight.
\begin{align}
    & - \EX_{i, x_{<i} \sim p} e^{-b(x_{<i})} \EX_{ x_{\geq i} \sim p(\cdot | x_{<i})} [ w_T(x) \log q_T(x_i | x_{<i}) ] \nonumber \\
    = & - \EX_{i, x_{<i} \sim p} e^{ \frac{1-T}{T} \log p(x_{<i}) - b(x_{<i}) } \nonumber \\
        & \EX_{ x_{\geq i} \sim p(\cdot | x_{<i})} [ e^{ \frac{1-T}{T} \log p(x_{\geq i} | x_{<i}) - b } \log q_T(x_i | x_{<i}) ] \nonumber \\
    = & - \EX_{ i, x \sim p } [  e^{ \frac{1-T}{T} \log p(x_{\geq i} | x_{<i}) - b(i) } \log q_T(x_i | x_{<i}) ] \label{eq:ar_loss}
\end{align}

Compared to Eq.~\ref{eq:iw_loss}, in Eq.~\ref{eq:ar_loss} we modified the expression in the exponent of the importance weight from $\log p(x)$ to $\log p(x_{\geq i} | x_{<i})$. This makes sense intuitively: once we have fixed a prefix $x_{<i}$ of the sequence, we only need to learn how likely a suffix should be relative to other suffixes, so we can ignore the probability of the prefix $p(x_{<i})$. Moreover, appealing to Proposition~\ref{prop:baseline}, we transformed the term $b$ to an index-dependent term $b(i)$. In a similar spirit to Eq.~\ref{eq:baseline_b}, we will set $b(i)$ to keep the weights close to $1$ by matching the empirical mean of the suffix log-likelihoods.
\begin{align}
    b(i) = \frac{1}{|\mathcal{D}|} \sum_{x \in \mathcal{D}} \frac{1-T}{T} \log p(x_{\geq i} | x_{<i})
    \label{eq:baseline_suffix}
\end{align}

\paragraph{Computing Suffix Likelihoods} One important consideration is the efficient implementation of variance-reduced LHTS on modern causal architectures of autoregressive models. Conveniently, we can vectorize the computation of suffix log-likelihoods $v_i(x) = \log p(x_{\geq i} | x_{<i}) $ via a reverse cumulative sum on the vector of univariate conditionals $u_i = \log p(x_{i} | x_{<i})$.

\paragraph{Suffix Horizon Length} Even with the above baseline, the variance of joint likelihoods can still grow quickly when the sequence length is long, e.g. $1024$. A practical approach to reducing the variance even more is by limiting the horizon to some length $h$. This means replacing all the suffix log-likelihoods $\log p(x_{\geq i} | x_{<i})$ with a horizon-bounded suffix log-likelihood $\log p(x_{i:k} | x_{<i})$ where $k = \min(i+h,\text{context length})$.

\section{Implementation}

In this section, we describe a list of practical considerations for implementing LHTS, and include concrete pseudocode for our implementation.

\paragraph{Clipping}
Even with a baseline to keep the exponents small, the importance weights still involve exponentiation. Therefore, the weights can become unstable when the log probabilities are much higher than the baseline or when the long horizon temperature is small. Therefore, we clip the log of the importance weights, introducing bias but reducing variance to help stabilize training.

\paragraph{Data Sampling}
The LHTS objective is written as an expectation over samples from $p$. We can indeed sample from $p$ in the training loop, although this empirically slowed down training by around a factor of $3$ for autoregressive language models. In practice, we can assume that $p$ is close to the data distribution $p_{\text{data}}$, and evaluate the LHTS objective using the training set $\mathcal{D}$. The weights of samples from $\mathcal{D}$ are then computed using $p$, which is faster than sampling from $p$.

\paragraph{Multi-Temperature Finetuning}
In some of the experiments, we finetune a single weight-tied model on a set of discrete temperatures $T_1 \ldots T_k$. Due to the differences in importance weights, more extreme temperatures incurred higher loss and hindered the training of other temperatures. Hence, we normalize the loss of each temperature to help with balanced training across the different temperatures.

\paragraph{KL Loss} Following design choices of Quark~\cite{lu2022quark}, we include a KL loss to avoid diverging from the base model $p$ too much. However, empirically we did not observe differences from the inclusion of this auxiliary loss.

\paragraph{Streaming Statistics}
As written in Eq.~\ref{eq:baseline_b} and Eq.~\ref{eq:baseline_suffix}, we choose the baseline to be the empirical mean of the data (suffix) log-likelihood. In practice, since the dataset could be very large (e.g. OpenWebText), we instead use the running mean of the data statistics as the baseline.

\subsection{Pseudocode}

Putting all the implementation details together, we present the pseudocode for LHTS finetuning in Alg.~\ref{alg:train}. We only present the variance-reduced LHTS for autoregressive models, since LHTS for diffusion models takes on a simpler form. The horizon likelihood is computed in lines 4\&5, where \texttt{RevCumSum} computes reverse cumulative sum, and \texttt{pad}$(\s_{h:}, {\bf 0}_{:h})$ appends a vector of $h$ zeros to the end of $\s_{h:}$. Importantly, the weight in line 8 is tailored to each index of the context window, using the formulation derived from Section~\ref{sec:variance-reduced-lhts}.
On line 10, \texttt{StopGradient} prevents the loss normalization calculations from affecting the gradient computation. The algorithm outputs parameters for a weight-tied model for sampling from multiple long horizon temperatures. In practice, designing the temperature embeddings to be linearly constrained (Section~\ref{sec:char-model}) even allows extrapolation to temperatures unseen during training.

\newcommand\mycommfont[1]{\footnotesize\rmfamily\color{MidnightBlue}{#1}}
\SetCommentSty{mycommfont}

\SetKwInput{KwInput}{Input}                
\SetKwInput{KwConstraint}{Constraint}      
\SetKwInput{KwOutput}{Output}              
\SetKwInput{KwReturn}{Return}              

\IncMargin{1.5em}

\begin{algorithm2e}[ht]
\caption{LHTS Finetuning}
\label{alg:train}
\SetAlgoLined
\DontPrintSemicolon
\Indm
\KwInput{Training data $\mathcal{D}$, model $p_\phi$, temperatures $\{T_1, \ldots, T_k\}$, clip $c$, suffix horizon $h$}
\KwOutput{Parameters $\theta$ for LHTS weight-tied models $q_{T_1} \ldots q_{T_k}$}
    \Indp
    $\b, \m, n, \theta \gets {\bf 0}, {\bf 0}, 0, \phi$\\
    \While{\text{training}}{ 
        \tcc{Sample training data and temperature $T_j$}
        $x \sim \mathcal{D} \quad \quad j \sim \mathcal{U}(1,k)$\\ 
        \tcc{Compute suffix log likelihood}
        $\s \gets \texttt{RevCumSum}(\log p_\phi(x_i | x_{<i}))$\\
        \tcc{Limit suffix horizon}
        $\s \gets \s - \texttt{pad}(\s_{h:}, {\bf 0}_{:h})$\\
        \tcc{Update streaming statistics}
        $n \gets n+1 \quad \quad \b \gets \b + \s$\\
        \tcc{Compute LHTS importance weights}
        $\w \gets \exp( \text{min}( \frac{1-T_j}{T_j} (\s - \frac{\b}{n}), c) )$\\
        \tcc{Compute index-weighted log loss}
        $\mathcal{L} \gets - \sum_i \w_i \log q_{T_j,\theta}(x_i | x_{<i})$\\
        \tcc{Compute KL Loss}
        $\mathcal{K} \gets \sum_i D_{KL}( p_\phi(\cdot | x_{<i}) || q_{T_j,\theta}(\cdot | x_{<i}) )$\\
        \tcc{Normalize loss for each temperature}
        $\m_j \gets \m_j + \texttt{StopGradient}(\mathcal{L} + \beta \mathcal{K})$\\
        \tcc{Update model parameters}
        $\theta \gets \theta - \nabla_\theta (\m_j / n)^{-1} (\mathcal{L} + \beta \mathcal{K}) $
    }

\Indm
\KwReturn{ $\theta$ }
\end{algorithm2e}

\section{Experiments}

We examine LHTS on three types of models: a diffusion-based image model (DDPM), an autoregressive character model, and an autoregressive language model (GPT-2~\cite{radford2019language}). For diffusion models, we compare against a pseudo-temperature baseline that reduces the variance of the diffusion noise. For autoregressive models, we compare against myopic temperature scaling. We aim to show that LHTS can generate samples with higher likelihood and more diversity, compared to the baseline temperature scaling methods. Finally, we test the practical benefits of the LHTS language model on a multiple choice task.

\subsection{Diffusion Image Model}

First, we apply LHTS on diffusion image models. Training diffusion models from scratch can be compute intensive, so we instead take a pretrained DDPM~\cite{ho2020denoising} and finetune with the LHTS objective. We compute the ELBO of each image in the CIFAR-10~\cite{krizhevsky2009learning} training set with respect to the pretrained DDPM in \textit{joint} space (i.e.,  without dividing by the number of dimensions) and compute the LHTS importance weight of each image. Then, we finetune for $50000$ steps using the DDPM objective in Eq.~\ref{eq:ddpm_loss} scaled by the computed importance weights.

Since there is no natural temperature scaling baseline for diffusion models, we compare against a pseudo-temperature scaling of the diffusion model by reducing the noise variance. At each step of the reverse diffusion process, we sample a noise vector from a Gaussian $\mathcal{N}(0, t)$, where the pseudo-temperature $t$ is controls the standard deviation of the noise. By using a smaller noise variance, we hope to push the Langevin sampling process to converge to images with higher likelihood, though at the cost of distorting the marginal distribution at each timestep.

To compare LHTS and pseudo-temperature, we plot the log-likelihood of samples and the FID score of the temperature-scaled models. Ideally, a temperature-scaled model should output samples that evaluate as more likely under the pretrained model distribution, without sacrificing too much diversity. In Figure~\ref{fig:ddpm-pareto}, we see that the LHTS model is able to beat the Pareto frontier of pseudo-temperature scaled models, where diversity is measured with the negative FID score. We plot uncurated samples in Figure~\ref{fig:ddpm-samples}. Even though both pseudo-temperature scaling and LHTS can push the model into sampling images with higher likelihood (with respect to the the pretrained model), LHTS is able to do so while sacrificing less of generation quality and diversity.

\begin{figure}
    \includegraphics[width=\linewidth]{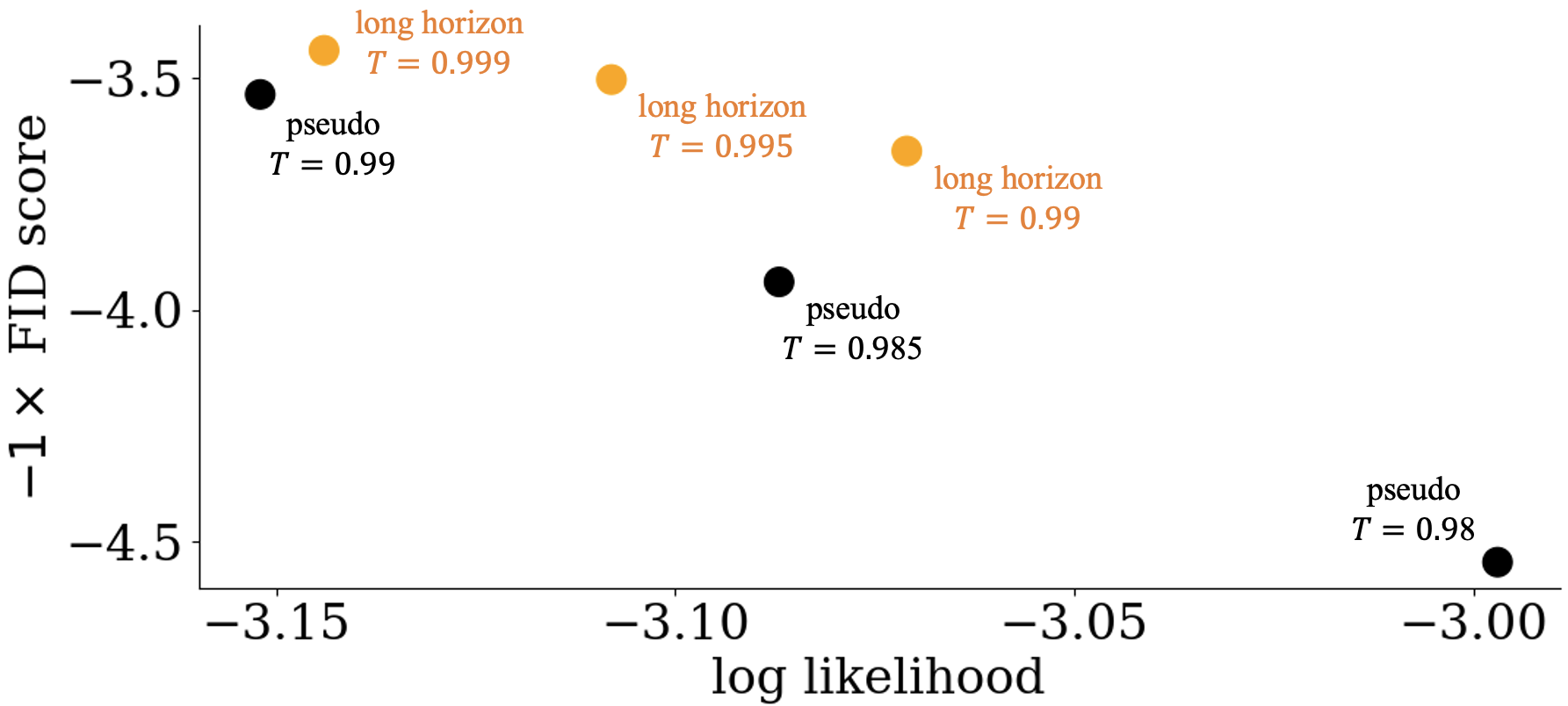}
    \caption{Temperature scaling on diffusion models for CIFAR-10. The black dots form the Pareto frontier of pseudo-temperature scaling on DDPM (with pseudo-temperatures $0.99$, $0.985$, and $0.98$), and the orange shows long horizon temperature scaling via finetuning (with long horizon temperatures $0.999$, $0.995$, $0.99$). The x-axis plots log likelihood and y-axis plots negative FID score using $50$k samples. Towards the top right of the chart is better.}
    \label{fig:ddpm-pareto}
\end{figure}

\begin{figure}
    \includegraphics[trim={0 0 0 0},clip,width=0.48\linewidth]{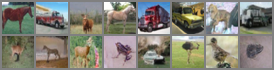} 
    \hfill
    \includegraphics[trim={0 0 0 0},clip,width=0.48\linewidth]{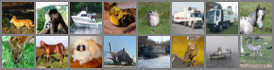}
    \caption{Generated image samples from temperature scaled DDPM. Left: pseudo-temperature scaling, with worse FID score $3.94$ and lower sample likelihood $-3.09$. Right: LHTS, with better FID score $3.66$ and higher sample likelihood $-3.07$.}
    \label{fig:ddpm-samples}
\end{figure}

\subsection{Autoregressive Character Model}
\label{sec:char-model}

Next, we experiment with a transformer-based autoregressive character model on the Text8 dataset~\cite{mahoney2011large}. Though character modeling is an easier task than language modeling, it provides useful insights on the differences between LHTS and myopic temperature scaling. In particular, it allows us to experiment with training a weight-tied model for a continuous range of long horizon temperatures.

\begin{figure}
    \includegraphics[width=\linewidth]{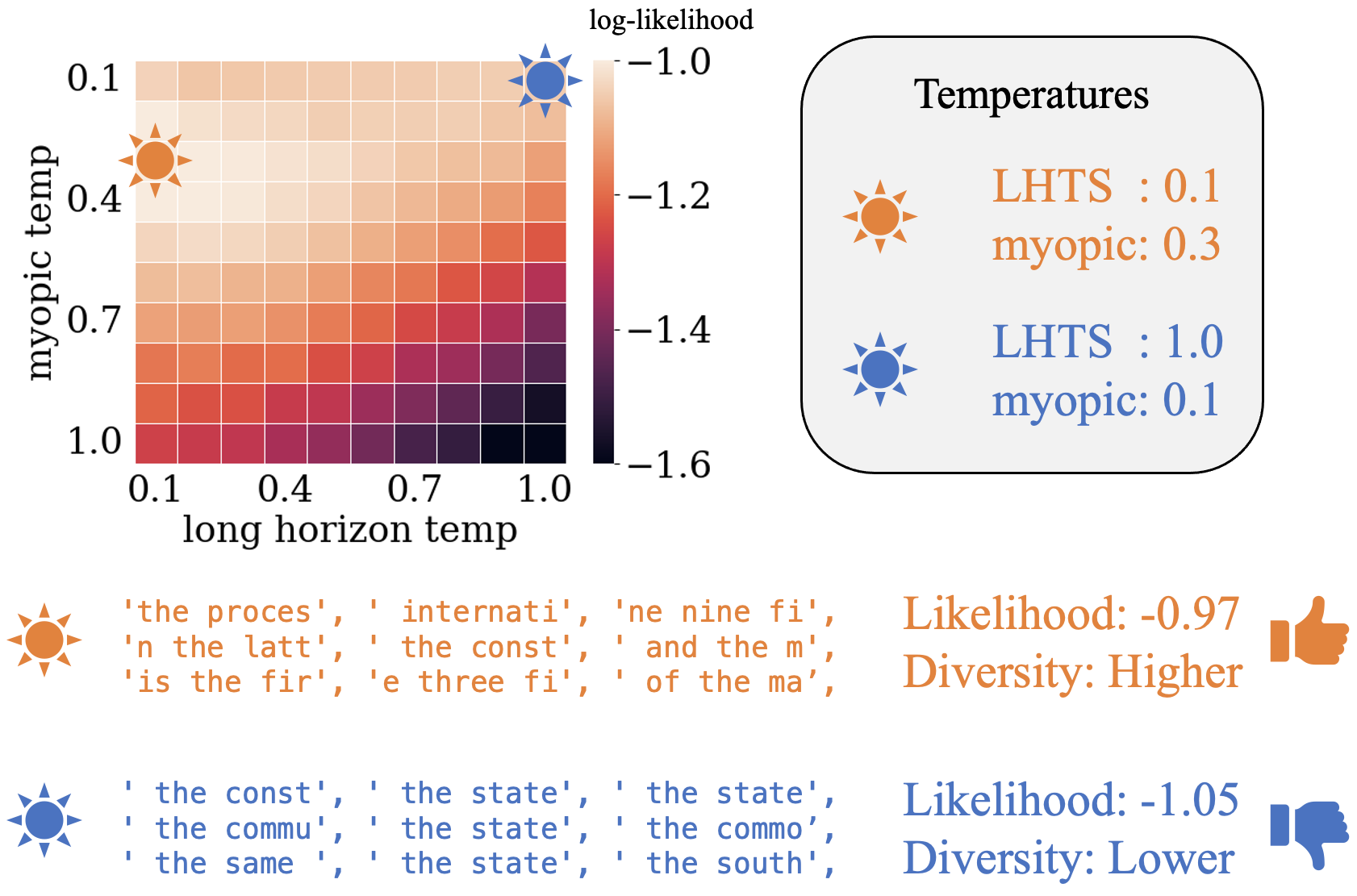}
    \caption{Autoregressive character model with a tunable long horizon temperature parameter. The heatmap shows log-likelihood of samples over various settings of long horizon and myopic temperature. Tuning both temperatures (orange) allows us to increase the likelihood more than just tuning the myopic temperature (blue). More importantly, we achieve a better trade-off between likelihood and diversity. The orange setting gives a higher likelihood with noticeably diverse chunks of text, whereas the blue setting gives lower likelihood yet gives many repetitive generations.}
    \label{fig:char}
\end{figure}

\paragraph{Continuous Temperature LHTS}
Our goal is to train a model that can be deployed with a ``knob'' for controlling the long horizon temperature, similar to how existing autoregressive models in deployment (e.g. GPT) give users control over the myopic temperature. 

We first train a base $12$-layer Transformer model from scratch, and store this model $p$ to use for computing LHTS importance weights. Then, we finetune a copy $q_T$ of this model with the LHTS objective on a continuous range of long horizon temperatures.
To do so, we place the long horizon temperature (a scalar) as a special token at the beginning of the transformer's context window. We then learn a linear embedding $r$ to map this prefix temperature token into the transformer's embedding space. Finally, we place the training data into the remaining context positions, and train with the LHTS objective. In other words, the prefix token of the long horizon temperature informs the transformer how ``sharp'' the predicted distribution should be.

Since choosing small temperatures can lead to large importance weights, we only vary the training temperature from $0.9$ to $1.1$. Nevertheless, at deployment time we can still feed temperatures beyond the training range into the learned linear embedding $r$, and push the model to extrapolate to unseen long horizon temperatures. Surprisingly, we find that the model extrapolates smoothly (Figure~\ref{fig:char}), with the sample likelihood steadily improving when feeding in long horizon temperatures much less than $0.9$ (leftward on the x-axis) into the prefix temperature token.

With a knob for the long horizon temperature, we can tweak both the long horizon and the myopic temperature in unison. For example, in Figure~\ref{fig:char} we consider two settings of tuning the long horizon temperature to $0.1$ and myopic temperature to $0.3$ (orange), versus tuning just the myopic temperature to $0.1$ (blue). The orange setting gives an average sample likelihood of $-0.97$ w.r.t. $p$, which is better than $-1.05$ w.r.t. $p$ for the blue setting. On top of that, the orange setting generates much more diverse character chunks\footnote{The model is trained on randomly cropped chunks of character, hence the samples appear to be cropped.}, whereas the blue setting repeatedly outputs the same few character chunks. This aligns with the intuition that LHTS can ``look ahead'' to find many diverse sequences of high likelihood, whereas myopic temperature scaling can only greedily choose the next token, leading to low diversity.

\subsection{Autoregressive Language Model}

Lastly, we demonstrate the scalability of LHTS on various sizes of the GPT-2 (small, medium, large) language model. As before, we take a pretrained model to compute LHTS importance weights, and finetune a copy of it using the LHTS objective.
We use the standard GPT-2 architecture and context window of $1024$, with pretrained weights from HuggingFace~\cite{wolf-etal-2020-transformers}, and finetune on the OpenWebText~\cite{Gokaslan2019OpenWeb} corpus.

We compare with two baselines: myopic temperature scaling, and a partition-based controllable generation approach (Quark)~\cite{lu2022quark}. Quark was introduced as a conditional generation approach for controlling the level of toxicity of the language model, but can be similarly applied for controlling sample likelihood of the model.

When comparing different approaches for temperature scaling, we consider both the likelihood and the quality of the generated samples. We can measure the likelihood of the generated samples by directly evaluating them on the pretrained model. For sample quality, and we rely on quantitative evaluation using diversity metrics~\cite{welleck2019neural, liang2022holistic} and a multiple-choice task~\cite{mikolov2013efficient}, where temperature scaling is commonly used to reduce the randomness of the model's answers.

In Figure~\ref{fig:llm-pareto}, we plot the sample diversity (measured by token-level repetition) and the log-likelihood over 1k sequences of context $1024$ for each temperature scale. In each of the three charts (for GPT2 small/medium/large), we see the Pareto frontier of circles formed by the myopic temperature scaling baseline with temperature ranging from $0.75$ to $0.8$. Using LHTS, plotted by triangles, we can achieve a better trade-off between diversity and likelihood, especially for GPT2-medium and GPT2-large. The parition (Quark) baseline is not visible since the repetition values are worse and do not reside within the bounds of the chart.

\begin{figure*}[t]
    \begin{subfigure}{.73\linewidth}
        \includegraphics[width=0.32\linewidth]{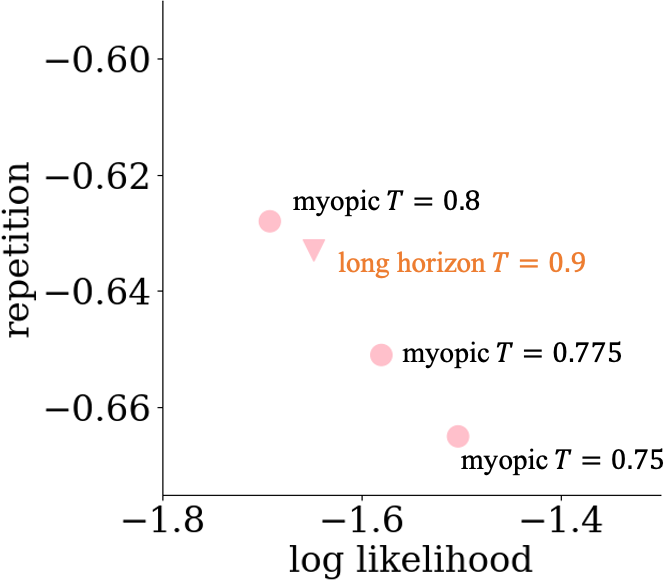}
        \hfill
        \includegraphics[width=0.32\linewidth]{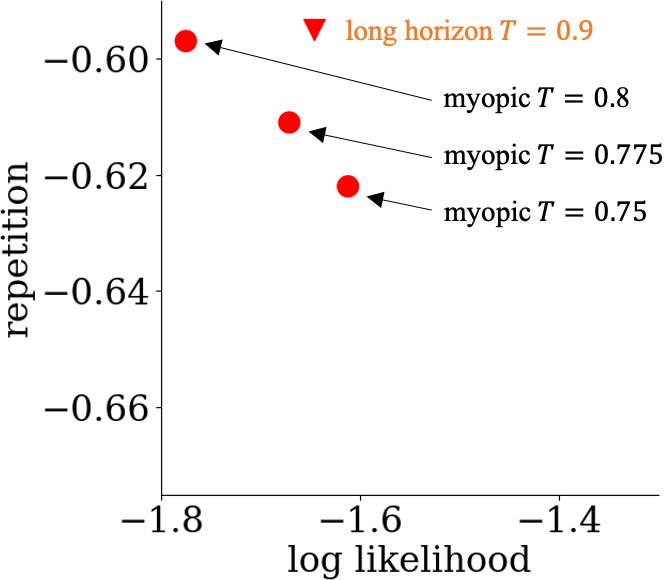}
        \hfill
        \includegraphics[width=0.32\linewidth]{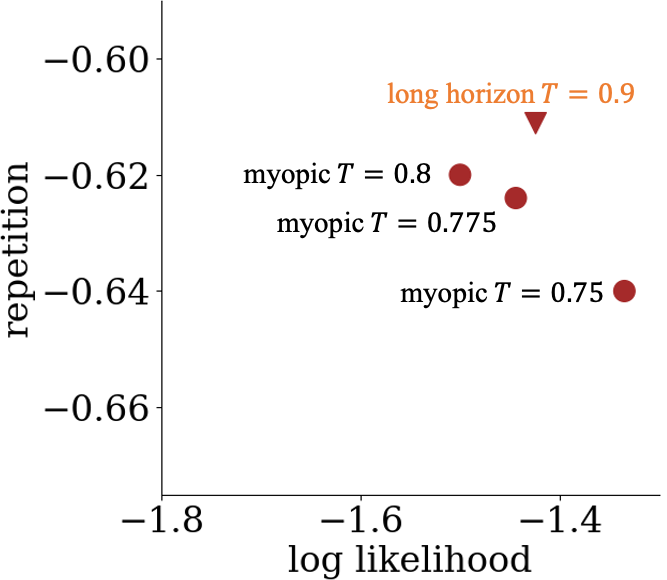}
        \caption{Plot of the Pareto frontier between repetitiveness of text and log-likelihood of text for GPT-2 models: small (pink), medium (red), large (maroon). Circles show myopic temperatures of $0.8$, $0.775$, $0.75$. Triangles show LHTS temperature of $0.95$. Towards the top right of the charts is better.}
        \label{fig:llm-pareto}
    \end{subfigure}
    \hfill
    \begin{subfigure}{.24\linewidth}
        \includegraphics[width=\linewidth]{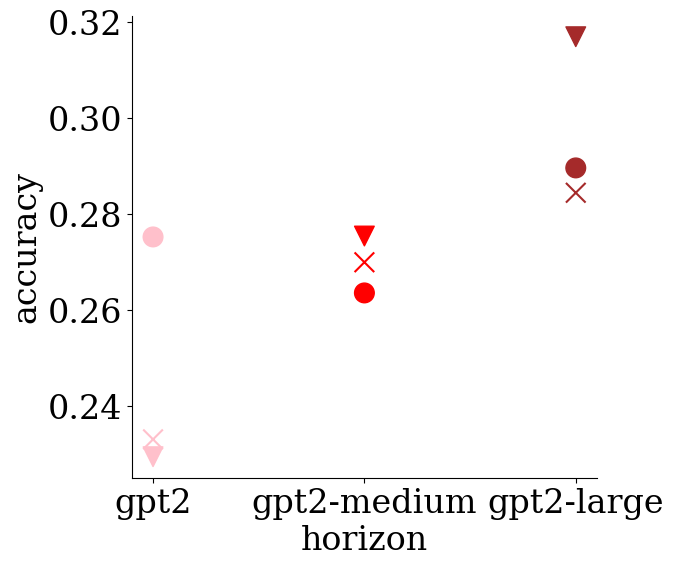}
        \caption{Visualizing the best settings from Table~\ref{tab:analogy}. Circle: myopic, Cross: Quark, Triangle: LHTS}
        \label{fig:analogy}
    \end{subfigure}
    \caption{Likelihood and sample quality metrics for temperature-scaled GPT-2.}
\end{figure*}

\begin{table*}[ht!]
    \caption{Accuracy of temperature-scaled GPT-2 on a multiple choice analogy task. Turning the myopic temperature down decreases the chance of irrelevant answers. At the lowest myopic temperature, LHTS generally improves upon the accuracy of the pretrained model.}
    \label{tab:analogy}
    \centering
    \begin{tabular}{cc|ccc|ccc|ccc}
    \toprule
     & model & \multicolumn{3}{c}{gpt2 small} & \multicolumn{3}{c}{gpt2 medium} & \multicolumn{3}{c}{gpt2 large}\\
     & myopic $T$ & 1.0 & 0.5 & 0.0 & 1.0 & 0.5 & 0.0 & 1.0 & 0.5 & 0.0 \\
    \midrule
     \multirow{ 3 }{*}{distinct}
     & LHTS $T=0.95$ & 0.172 & 0.238 & \textbf{0.255} & 0.185 & 0.242 & \textbf{0.252} & 0.179 & 0.225 & 0.232\\
     & pretrained & 0.143 & 0.231 & \textbf{0.254} & 0.156 & 0.220 & 0.233 & 0.142 & 0.218 & 0.228\\
     & partition (Quark) & 0.111 & 0.201 & 0.233 & 0.155 & 0.219 & 0.232 & 0.158 & 0.229 & \textbf{0.250}\\ 
    \midrule
     \multirow{ 3 }{*}{duplicate}
     & LHTS $T=0.95$ & 0.177 & 0.224 & 0.230 & 0.225 & 0.270 & \textbf{0.275} & 0.249 & 0.310 & \textbf{0.317}\\
     & pretrained & 0.189 & 0.267 & \textbf{0.275} & 0.200 & 0.262 & 0.264 & 0.203 & 0.279 & 0.290\\
     & partition (Quark) & 0.137 & 0.221 & 0.233 & 0.197 & 0.264 & 0.270 & 0.213 & 0.279 & 0.285\\
    \bottomrule
    \end{tabular}
\end{table*}

\paragraph{Analogy Multiple Choice} We evaluate the generation quality of LHTS on a downstream multiple-choice task that tests the model's ability to choose correct analogies. We create a set of $1400$ questions from a bank of analogies~\cite{mikolov2013efficient} with relationships such as \textit{country:capital}, \textit{present-tense:past-tense}, \textit{male:female}.
We prompt GPT-2 using the following format, including three similar examples in-context:\\
{\scriptsize \textit {
Question: Please choose the word pair that is most analogous to ``Algeria dinar''.\\
Choices: ``Macedonia dollar'', ``Vietnam baht'', ``Bulgaria lev'', ``Armenia naira''\\
Answer: 
}}

To measure correctness, we check the next $8$ generated tokens for a unique match with the correct choice, ignoring double matches. We also create a variant of questions where three of the choices share the first word, inspired by the example in Figure~\ref{fig:cover}. The three duplicates are chosen independently from (and can include) the correct choice.\\
{\scriptsize \textit {
Question: Please choose the word pair that is most analogous to ``Athens Greece''.\\
Choices: ``Moscow Japan'', ``Rome Italy'', ``Moscow Pakistan'', ``Moscow Australia''\\
Answer: 
}}

In Table~\ref{tab:analogy} we present the accuracy of GPT-2 on this analogy multiple-choice task. The row \textit{distinct} refers to the first set of questions, and the row \textit{duplicate} refers to the second set of questions with common first words. For each question we sample the model $50$ times. 
The accuracy improves across the board as we scale down the myopic temperature from $1.0$ to $0.0$, since all models reduce the chance of outputting irrelevant answers. At the best myopic temperature of $0.0$, LHTS gives the highest accuracy in 4/6 settings, with $10\%$ improvement to give $31\%$ accuracy on the most competitive setting with the \textit{duplicate} question set and GPT2 large. We also see that using Quark to condition on joint likelihood is less effective on average, possibly because partitioning full sequences based on joint likelihood is more crude than a suffix-dependent rescaling (Section~\ref{sec:variance-reduced-lhts}), and reinforcing based on likelihood was noticeably unstable during training.

\section{Conclusion}

We present Long Horizon Temperature Scaling, a novel and tractable approach to sampling from temperature-scaled joint distributions. Compared to previous methods, LHTS is non-myopic and compatible with all likelihood-based generative models. To reduce the variance of the LHTS objective, we introduce important techniques such as fitting baselines and limiting suffix horizon lengths. In some settings, LHTS even shows smooth extrapolation to unseen temperatures, enabling low-variance training on mild temperatures and sampling on extreme temperatures.
We demonstrate the applicability of LHTS on diffusion and autoregressive models in image and language domains. LHTS shows improvements over pseudo and myopic temperature scaling in the trade-off between likelihood and sample diversity, and in the accuracy of a multiple-choice analogy task.

\textbf{Limitations and Future Work} \, Temperature scaling the joint distribution is inherently intractable (scaling the temperature to $0$ gives the argmax of the joint distribution), and LHTS only aims to learn an approximation to the solution. In addition, LHTS involves finetuning the model, as opposed to pure post-hoc alternatives such as myopic or pseudo temperature scaling. Future work can look into exploring multi-temperature finetuning further, or other divergences besides forward-KL for the LHTS objective.

\section{Acknowledgments}

We thank anonymous reviewers for their constructive feedback. This research was supported in part by NSF (\#1651565), ARO (W911NF-21-1-0125), ONR (N00014-23-1-2159, N00014-22-1-2293), CZ Biohub, HAI.

\bibliography{ref}
\bibliographystyle{icml2023}

\newpage
\appendix
\onecolumn
\section{Experimental Settings}

\paragraph{Diffusion Model}

\begin{itemize}
    \item Architecture: DDPM
    \item Learning Rate: 2e-4
    \item Batch Size: 128
    \item EMA decay: 0.9999
    \item Grad Clip: 1
    \item Steps: 50000
    \item Warmup Steps: 5000
    \item LHTS Clip: 0.5
\end{itemize}

\paragraph{Character Model}

\begin{itemize}
    \item Architecture: 12-layer Transformer, embedding 768, hidden size 3072, num heads 12, num layers 12
    \item Learning Rate: 5e-4
    \item Batch Size: 512
    \item Weight Decay: 0.001
    \item Grad Clip: 0.25
    \item Epochs: 200
    \item LHTS Clip: 3
    \item LHTS Suffix Horizon: 25
\end{itemize}

\paragraph{Language Model}

\begin{itemize}
    \item Architecture: GPT-2 (small, medium, large), context 1024
    \item Learning Rate: 1e-4
    \item Batch Size: 512
    \item Weight Decay: 0.01
    \item Grad Clip: 0.25
    \item Steps: 1000
    \item LHTS KL beta: 0.05
    \item LHTS Clip: 3
    \item LHTS Suffix Horizon: 8
\end{itemize}

\section{Example Sample from GPT-2 Large with LHTS}
\textit{It is always great when you get a chance to get the inside scoop as to why a franchise is so popular, and today I have learned that one of the reasons is the players they play them against. When I joined the St Louis Cardinals, one of my first observations of Albert Pujols was that he was a little tough to hit. Now, having played him a ton of baseball, there's no doubt in my mind that he's just as tough, if not tougher, than anybody else on the field. Pujols, like most power hitters before him, is known best for running his bat out all over the field. The one thing he has in his favor, though, is that when he does come back to the field, he always finds his way to hitting lefties.}

\section{Additional Experiments}

We evaluate the diffusion models and language models on additional metrics such as SSIM, MAUVE score, and HELM benchmarks.

\subsection{Diffusion Model}

We use the same DDPM diffusion model from Figure~\ref{fig:ddpm-pareto}, finetuned with LHTS, and report the Structural Similarity Index (SSIM). Unlike FID, SSIM does not consider diversity, but rather closeness to a ground-truth image. Therefore, we see that the SSIM scores in Table~\ref{tab:ssim} align roughly with the log-likelihood scores of Figure~\ref{fig:ddpm-pareto}.

\begin{table}[h]
    \centering
    \caption{SSIM of diffusion model with pseudo-temperature scaling and long horizon temperature scaling on CIFAR-10.}
    \label{tab:ssim}
    \begin{tabular}{c|ccc|ccc}
        \toprule
         & \multicolumn{3}{|c|}{PseudoTemp} & \multicolumn{3}{|c}{LHTS} \\
         Temperature & 0.98 & 0.985 & 0.99 & 0.99 & 0.995 & 0.999  \\
         \midrule
         SSIM & 0.915 & 0.913 & 0.913 & 0.913 & 0.913 & 0.911 \\
         \bottomrule
    \end{tabular}
\end{table}

\subsection{Language Model}

We examine the same GPT-2 language model from Figure~\ref{fig:llm-pareto} and Table~\ref{tab:analogy}, using the gpt2-large size. We compare the use of standard myopic temperature scaling versus LHTS finetuning on MAUVE score~\cite{pillutla2021mauve} and a number of metrics from HELM~\cite{liang2022holistic}.

\paragraph{MAUVE score}

We evaluate MAUVE score on OpenWebText~\cite{Gokaslan2019OpenWeb} using the setup in the MAUVE paper~\cite{pillutla2021mauve}, with 1000 generations and a prompt length of 30 tokens. We find that LHTS does not improve MAUVE score, and that both forms of temperature scaling (myopic and LHTS) in general decrease MAUVE score.

\begin{table}[h]
    \centering
    \caption{MAUVE score of GPT-2 (gpt2-large) with myopic temperature scaling and long horizon temperature scaling on OpenWebText.}
    \label{tab:mauve}
    \begin{tabular}{c|c|cc|cc}
        \toprule
         & No Scaling & \multicolumn{2}{|c|}{Myopic Only} & \multicolumn{2}{|c}{LHTS} \\
         Myopic Temperature & 1.0 & 0.8 & 0.0 & 1.0 & 0.0 \\
         Long Horizon Temperature & 1.0 & 1.0 & 1.0 & 0.9 & 0.9 \\
         \midrule
         MAUVE & 0.76 & 0.57 & 0.00 & 0.41 & 0.00 \\
         \bottomrule
    \end{tabular}
\end{table}

\newpage

\paragraph{HELM}

We evaluate on some metrics from the HELM benchmark such as XSUM~\cite{narayan2018don}, BoolQ~\cite{clark2019boolq}, and NaturalQA open book~\cite{kwiatkowski2019natural}, which respectively test for summarization, classification, and reading comprehension with multiple choice.

\begin{table}[h]
    \centering
    \caption{Evaluation of GPT-2 (gpt2-large) on various metrics from HELM.}
    \label{tab:helm}
    \begin{tabular}{c|cc|cc}
        \toprule
         & \multicolumn{2}{|c|}{Myopic Only} & \multicolumn{2}{|c}{LHTS} \\
         Myopic Temperature & 0.8 & 0.0 & 1.0 & 0.0 \\
         Long Horizon Temperature & 1.0 & 1.0 & 0.9 & 0.9 \\
         \midrule
xsum\_test\_rouge2 & 0.016 & 0.019 & 0.013 & 0.02 \\
xsum\_test\_perp (lower) & 6.72 & 2.305 & 5.137 & 1.725 \\
\midrule
boolq\_exact\_match & 0.383 & 0.583 & 0.417 & 0.603 \\
boolq\_exact\_match\_fairness & 0.167 & 0.483 & 0.173 & 0.507 \\
boolq\_exact\_match\_robustness & 0.087 & 0.35 & 0.113 & 0.43 \\
boolq\_ece\_10\_bin (lower) & 0.112 & 0.164 & 0.124 & 0.174 \\
\midrule
naturalqa\_open\_f1\_score & 0.157 & 0.257 & 0.146 & 0.248 \\
naturalqa\_open\_f1\_score\_fairness & 0.058 & 0.153 & 0.041 & 0.164 \\
naturalqa\_open\_f1\_score\_robustness & 0.026 & 0.074 & 0.031 & 0.055 \\
naturalqa\_open\_ece\_10\_bin (lower) & 0.109 & 0.134 & 0.086 & 0.14 \\
         \bottomrule
    \end{tabular}
\end{table}

In Table~\ref{tab:helm}, we can see that LHTS shows some improvements in perplexity and accuracy, and less so for calibration and F1-score. LHTS can also help with fairness or robustness perturbations on both accuracy and F1-score. However, we note that these scores are generally low due to the relatively small size of GPT-2, so evaluations on larger models are needed for more conclusive results.

\end{document}